\documentclass[11pt]{article}

\usepackage{amsmath, amssymb, amsthm}
\usepackage{geometry}
\usepackage{hyperref}
\usepackage{graphicx}
\usepackage{booktabs}
\usepackage{setspace}
\usepackage{microtype}
\usepackage{xcolor}
\usepackage[numbers]{natbib}

\newtheorem{theorem}{Theorem}
\newtheorem{proposition}{Proposition}

\newtheorem{definition}{Definition}
\newtheorem{remark}{Remark}

\title{\textbf{Disease Is a Spectral Perturbation}}

\author{
  John D.\ Mayfield, MD, PhD, MSc\textsuperscript{1,2} \and
  Matthew S.\ Rosen, PhD\textsuperscript{1,2}
}

\date{
  \textsuperscript{1}Athinoula A.\ Martinos Center for Biomedical Imaging,
  Massachusetts General Hospital\\
  \textsuperscript{2}Department of Radiology, Harvard Medical School,
  Boston, MA 02129\\[6pt]
  \small Correspondence: \texttt{jdmayfield@mgh.harvard.edu}
}

\begin{document}

\maketitle

\begin{abstract}

We propose a novel method of understanding disease transformation from a healthy baseline with biomarker-level explainability. By modeling the biomarker covariance matrices of healthy controls and disease states, the perturbation can be individually characterized to accomplish mechanistic explanations of disease trajectories, both at a molecular level and for individual patients. Given a cohort of $n$ patients each measured on $p$ biomarkers, we define the \emph{biomarker "Hamiltonian"} $H = X^\top X / n \in \mathbb{R}^{p \times p}$, where $X \in \mathbb{R}^{n \times p}$ is the covariant biomarker matrix. The eigenvectors of $H$ define a set of normal modes of biomarker coordination, and the eigenvalues quantify the energy carried by each mode. In the healthy state, the reference Hamiltonian $H_0$ governs this structure where disease perturbs $H_0$ by an additive operator $\Delta H$, thus shifting eigenvalues and rotating eigenvectors in proportion to the severity of pathological disruption. We formalize this framework, derive the spectral change given a disease perturbation, and demonstrate that the projection of a newly diagnosed patient's cumulative biomarker covariance structure onto disease-discriminant eigenmodes constitutes an optimal prognostic statistic for greater precision in disease prognosis. This work serves as a veritable white paper with application across a panoply of disease frameworks from cancer to neurodegenerative disorders.
\end{abstract}

\noindent\textbf{Keywords:} biomarker covariance, spectral decomposition, Hamiltonian formalism, disease perturbation, prognostic inference, multiomics

\vspace{12pt}

\section{Introduction}

The fundamental challenge in quantitative medicine is not the acquisition of biomarker data but its interpretation, especially given the growing number of readily available diagnostic tests coming into practice. High-dimensional multiomics profiles now routinely characterize patients across genomic, transcriptomic, proteomic, and metabolomic layers \citep{Hasin2017, Babu2023}. Longitudinal cohorts track how these profiles evolve across disease states, and trajectory modeling frameworks attempt to extract the prognostic signal embedded in this evolution \citep{Nagin2018, Hjaltelin2023}. Yet, despite the sophistication of individual methods such as joint latent class models \citep{ProustLima2022}, multistate models \citep{Jiang2025circ}, temporal biomarker detection \citep{Metwally2021}, and multiomic integration \citep{Maghsoudi2022}, the field lacks a unifying mathematical formalism from which these approaches can be derived as special cases and against which their assumptions can be evaluated.

We argue that this object exists as the covariance matrix of biomarker measurements, $H = X^\top X / n$. This matrix, which we term the \emph{biomarker "Hamiltonian"} by analogy with the energy operator in quantum mechanics, encodes the full second-order statistical structure of a patient cohort. Its eigenvectors define the principal axes of biomarker covariation, and its eigenvalues measure the variance, rather the energy, concentrated along each axis. A disease state is not merely a shift in mean biomarker levels. It is a reorganization of covariance structure where certain modes of coordinated variation are amplified, others suppressed, and the dominant eigenmodes rotate away from those of the healthy reference. In this precise sense, \emph{disease is a spectral perturbation}.

This framing is not merely geometric as it carries operational consequences. If the healthy covariance structure $H_0$ is known, then the perturbation $\Delta H = H_d - H_0$ induced by disease has a well-characterized spectral signature. Matrix perturbation theory, specifically the Weyl inequalities and the Davis-Kahan theorem, provides discrete bounds on how much the eigenvalues and eigenvectors of $H_0$ can shift under $\Delta H$ as a function of the perturbation magnitude. These bounds translate directly into statements about the differentiation of disease states, the stability of prognostic scores across cohorts, and the conditions under which a spectral signature from one population generalizes to another.

Current multiomics trajectory methods are powerful, yet remain fragmented. Group-based trajectory modeling identifies latent subgroups with shared biomarker dynamics using finite mixture models \citep{Nagin2018, NaginJones2024} but does not characterize the coordination structure within or between groups. Joint longitudinal models estimate individual trajectories with high efficiency \citep{Li2021, Singh2025} but operate on mean trajectories rather than on the specific eigenmodes of variation patterns between biomarkers. Machine learning integration methods such as JIVE (Joint and Individual Variation Explained), MOFA (Multi-Omics Factor Analysis), intNMF (Integrative Non-negative Matrix Factorization) \citep{Maghsoudi2022, Morabito2025} decompose multiomics data into shared and modality-specific factors, however, do not ground these factors in a reference covariance structure relative to which disease perturbation is defined. Proteomic and multiomic prediction models achieve strong performance \citep{Carrasco2024, Ouwerkerk2023} but derive their prognostic scores empirically rather than from the overall spectral geometry of the covariance operator.

Each of these methods operates on a projection or special case of the covariance eigenstructure without naming the object onto which it is projecting nor proving the optimality of that projection. This paper attempts to provide the unifying formalism. 

The paper is organized as follows. Section~\ref{sec:formalism} defines the biomarker Hamiltonian and its spectral decomposition. Section~\ref{sec:perturbation} formalizes disease as a spectral perturbation and derives the principal theoretical results. Section~\ref{sec:score} derives the spectral prognostic score and proves its optimality. Section~\ref{sec:transfer} establishes conditions for eigenbasis transfer across cohorts. Section~\ref{sec:unification} shows that existing multivariate biomarker methods are special cases of this framework. Section~\ref{sec:discussion} situates the framework within the landscape of current multiomics disease modeling and identifies open problems.

\section{The Biomarker Hamiltonian}
\label{sec:formalism}

\subsection{Notation and Setup}

Let a patient cohort consist of $n$ individuals, each characterized by $p$ biomarkers. We represent the cohort as a data matrix $X \in \mathbb{R}^{n \times p}$, where $X_{ij}$ denotes the measurement of biomarker $j$ in patient $i$. Without loss of generality, assume columns of $X$ are mean-centered, so that $\sum_{i=1}^n X_{ij} = 0$ for all $j$.

\begin{definition}[Biomarker Hamiltonian]
The \emph{biomarker Hamiltonian} of cohort $X$ is the $p \times p$ matrix
\begin{equation}
H = \frac{1}{n} X^\top X.
\label{eq:hamiltonian}
\end{equation}
\end{definition}
where $H$ is the sample covariance matrix of the biomarker measurements, a symmetric positive semidefinite matrix by construction with $H = H^\top$ and $v^\top H v \geq 0$ for all $v \in \mathbb{R}^p$. Its rank is at most $\min(n, p)$.

The term \emph{Hamiltonian} is adopted deliberately. In quantum mechanics and classical statistical mechanics, the Hamiltonian is the energy operator of a system where its eigenvectors define the system's normal modes and its eigenvalues define the energy of each mode. The biomarker Hamiltonian plays an identical role at the cohort level. Its eigenvectors define the principal axes of biomarker covariance across patients, and its eigenvalues measure the variance (the energy) concentrated along each axis. This analogy is not decorative, but rather, motivates the thermodynamic extension developed in Section~\ref{sec:discussion}.

\subsection{Spectral Decomposition}

Since $H$ is real and symmetric, the spectral theorem guarantees a decomposition
\begin{equation}
H = Q \Lambda Q^\top,
\label{eq:spectral}
\end{equation}
where $Q = [q_1 \mid q_2 \mid \cdots \mid q_p] \in \mathbb{R}^{p \times p}$ is orthogonal, $Q^\top Q = I_p$, and $\Lambda = \mathrm{diag}(\lambda_1, \lambda_2, \ldots, \lambda_p)$ with eigenvalues ordered $\lambda_1 \geq \lambda_2 \geq \cdots \geq \lambda_p \geq 0$.

The columns $q_k$ are the \emph{eigenmodes} of the cohort representing orthonormal vectors in biomarker space along which the patient population is maximally, then successively, dispersed. The corresponding eigenvalues $\lambda_k$ measure the variance of patient projections onto $q_k$. Collectively, the eigenmodes form a complete orthonormal basis for $\mathbb{R}^p$, the \emph{spectral basis} of $H$.

\begin{remark}
The decomposition \eqref{eq:spectral} is identical to the principal component decomposition (PCA) of $X$. The distinction between PCA and the present framework is not computational but interpretive. PCA treats the eigenmodes as axes for dimensionality reduction by selecting the top-$k$ modes that explain a target fraction of total variance. The spectral biomarker framework treats the complete eigenbasis as a coordinate system for the space of possible disease perturbations from which dimensionality reduction is only a single application.
\end{remark}

\subsection{The Spiked Covariance Model and Disease Signal}

For large cohorts with $n, p \to \infty$ and $p/n \to \gamma \in (0, \infty)$, random matrix theory describes the limiting eigenvalue distribution of $H$ under the null hypothesis that $X$ has i.i.d.\ Gaussian entries. This limiting distribution is the Marchenko-Pastur law \citep{MarchenkoPastur1967}, with support $[(\sigma(1 - \sqrt{\gamma}))^2,\, (\sigma(1 + \sqrt{\gamma}))^2]$ for variance $\sigma^2$.

\textbf{Disease signal, by contrast, arises from structured covariance that escapes the bulk Marchenko-Pastur distribution.} In the \emph{spiked covariance model} \citep{Johnstone2001}, the population covariance has the form $\Sigma = I_p + \sum_{k=1}^r \theta_k u_k u_k^\top$, where $u_k$ are unit vectors and $\theta_k > 0$ are spike strengths. Eigenvalues corresponding to spikes separate from the bulk when $\theta_k > \sqrt{\gamma}$, the \emph{Baik–Ben Arous–Péché (BBP) transition} \citep{Baik2005}. Below this threshold, spikes are relatively invisible in the sample spectrum. A comparator of gained and lost covariance using synthetic data is demonstrated in Figure {\ref{fig:biologic_terrain}}.

This has a direct implication for disease characterization. A disease-induced perturbation of the covariance structure is detectable in the sample spectrum only when the perturbation magnitude exceeds $\sqrt{\gamma}$. This threshold defines the minimum detectable effect size for spectral biomarker methods and provides a principled sample size criterion such that $n$ must be large enough that $\sqrt{p/n}$ is small relative to the expected perturbation strength.

\begin{figure}[!t]
    \centering
    \includegraphics[width=\textwidth]{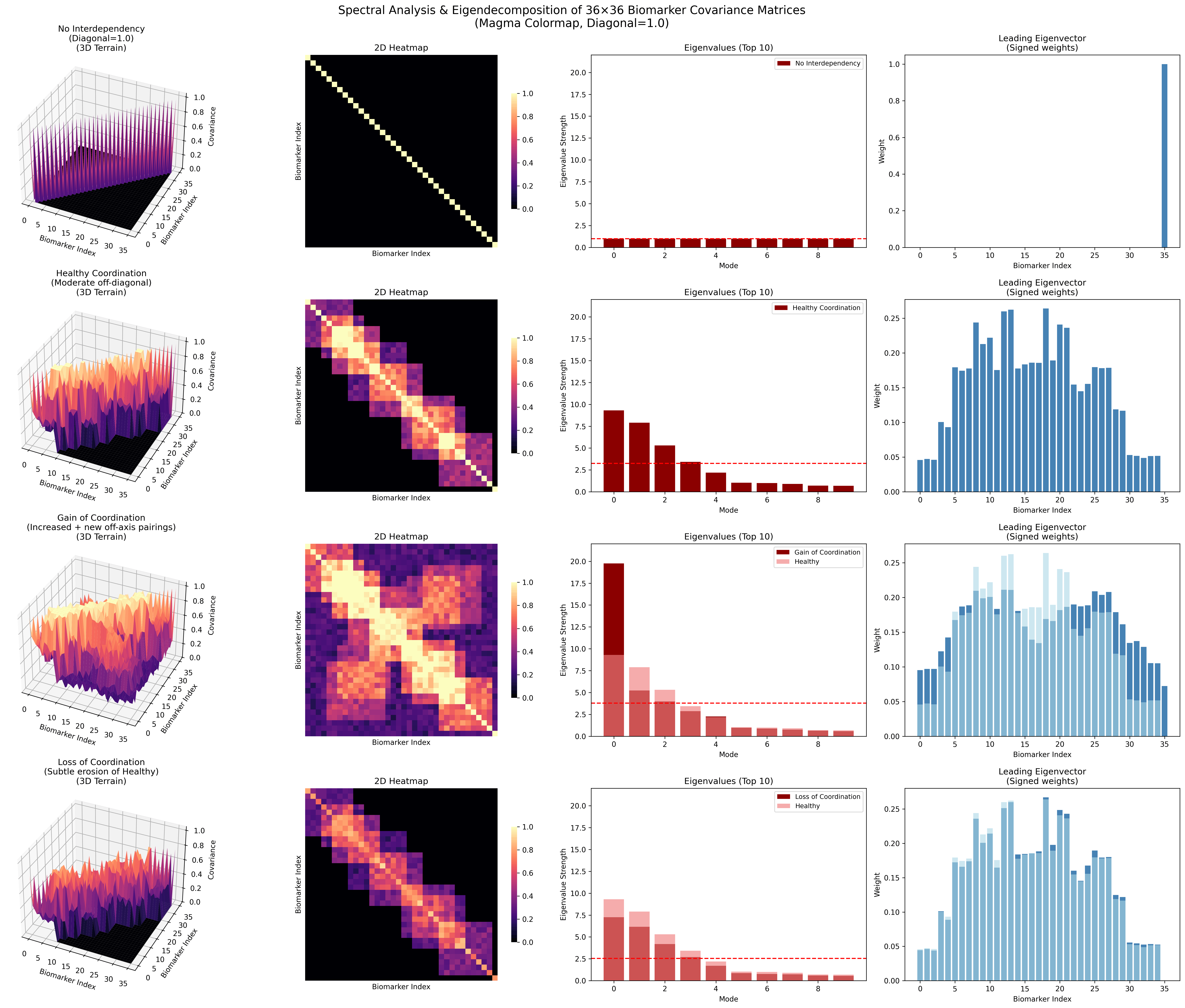}
    \caption{\textbf{Spectral signatures of four biologic terrain states across 36 biomarkers.}
    Each row represents a distinct covariance regime visualized across four panels.
    \textit{Left:} Three-dimensional terrain of the 36$\times$36 biomarker covariance matrix, where surface height encodes covariance magnitude.
    \textit{Center-left:} Two-dimensional heatmap of the same covariance matrix
    (Magma colormap; diagonal fixed at 1.0).
    \textit{Center-right:} Top 10 eigenvalues, with the dashed red line marking
    the Marchenko-Pastur threshold separating structured signal from
    sampling noise. Where applicable, the healthy reference spectrum
    is shown in light red for comparison.
    \textit{Right:} Signed weights of the leading eigenvector across all 36 biomarkers.
    \textbf{Row 1 (No Interdependency):} The identity covariance structure yields a flat eigenspectrum with a single dominant eigenvalue concentrated on one biomarker, indicating the absence of coordinated variation.
    \textbf{Row 2 (Healthy Coordination):} Moderate off-diagonal structure produces a distributed leading eigenvector with broad biomarker participation and several eigenvalues above the noise threshold.
    \textbf{Row 3 (Gain of Coordination):} Increased off-axis pairings amplify the dominant eigenvalue substantially beyond the healthy reference, reflecting
    pathological hypercorrelation along the leading mode.
    \textbf{Row 4 (Loss of Coordination):} Subtle erosion of healthy covariance structure attenuates eigenvalue strength toward the noise floor, consistent with a negative spectral disruption index ($\Phi < 0$).}
    \label{fig:biologic_terrain}
\end{figure}

\section{Disease as Spectral Perturbation}
\label{sec:perturbation}

\subsection{The Reference and Disease Hamiltonians}

Let $H_0 \in \mathbb{R}^{p \times p}$ denote the Hamiltonian of a reference healthy cohort and $H_d \in \mathbb{R}^{p \times p}$ the Hamiltonian of a disease cohort measured on the same $p$ biomarkers. We define the \emph{disease perturbation}
\begin{equation}
\Delta H = H_d - H_0.
\label{eq:perturbation}
\end{equation}

where $\Delta H$ is real symmetric as the difference of two real symmetric matrices. It need not be positive semidefinite; its negative eigenvalues correspond to modes of biomarker covariation that are suppressed by disease relative to the healthy reference.

Let $H_0 = Q_0 \Lambda_0 Q_0^\top$ and $H_d = Q_d \Lambda_d Q_d^\top$ be the spectral decompositions of the reference and disease Hamiltonians, respectively. The central objects of interest are the eigenvalue shifts $\delta\lambda_k = \lambda_k^d - \lambda_k^0$ and the eigenvector rotations, measured by the principal angles between the reference and disease eigenspaces.

\subsection{Eigenvalue Perturbation: Weyl's Inequality}

The following classical result controls eigenvalue shifts under symmetric perturbation.

\begin{theorem}[Weyl, 1912]
Let $A, B \in \mathbb{R}^{p \times p}$ be real symmetric matrices with eigenvalues $\alpha_1 \geq \cdots \geq \alpha_p$ and $\beta_1 \geq \cdots \geq \beta_p$, respectively. Then for all $k = 1, \ldots, p$,
\begin{equation}
|\alpha_k - \beta_k| \leq \|A - B\|_2,
\label{eq:weyl}
\end{equation}
where $\|\cdot\|_2$ denotes the spectral norm.
\end{theorem}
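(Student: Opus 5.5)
We will derive the bound from the Courant--Fischer min--max characterization of eigenvalues of a real symmetric matrix. For any real symmetric $M \in \mathbb{R}^{p\times p}$ with eigenvalues $\mu_1 \geq \cdots \geq \mu_p$, this states that
\begin{equation*}
\mu_k = \max_{\dim S = k}\; \min_{x \in S,\ \|x\|_2 = 1} x^\top M x .
\end{equation*}
We take this as standard. If a self-contained argument is wanted, it follows from the spectral decomposition \eqref{eq:spectral}: on the span of the top $k$ eigenvectors the Rayleigh quotient is at least $\mu_k$. Conversely, any $k$-dimensional subspace meets the span of the bottom $p-k+1$ eigenvectors nontrivially, by a dimension count, and there the Rayleigh quotient is at most $\mu_k$.

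Set $E = A - B$ and $\varepsilon = \|E\|_2$. Since $E$ is symmetric, $\varepsilon = \max_i |\text{eigenvalue}_i(E)|$. Expanding a unit vector $x$ in the orthonormal eigenbasis of $E$ then gives the pointwise quadratic-form bound
\begin{equation*}
-\varepsilon \leq x^\top E x \leq \varepsilon \quad \text{for all } \|x\|_2 = 1 .
\end{equation*}
Consequently $x^\top A x \leq x^\top B x + \varepsilon$ for every unit vector $x$.

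Fix $k$ and a $k$-dimensional subspace $S$. Taking the minimum over unit $x \in S$ of both sides of this inequality preserves it, giving $\min_{S} x^\top A x \leq \min_{S} x^\top B x + \varepsilon$. Taking the maximum over all $S$ and applying Courant--Fischer yields $\alpha_k \leq \beta_k + \varepsilon$. Exchanging the roles of $A$ and $B$, which is legitimate since $\|B - A\|_2 = \|A - B\|_2$, gives $\beta_k \leq \alpha_k + \varepsilon$. Together these give $|\alpha_k - \beta_k| \leq \|A - B\|_2$ for every $k$, which is \eqref{eq:weyl}.

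The only nontrivial ingredient is the min--max characterization, specifically the dimension-counting step that any $k$-dimensional subspace intersects the span of the trailing $p-k+1$ eigenvectors. I expect this to be the main point requiring care, because it is where the ordering of eigenvalues, and hence the matching of $\alpha_k$ with $\beta_k$, is used. Once Courant--Fischer is in hand, the rest is monotonicity of max and min under a uniform additive bound.
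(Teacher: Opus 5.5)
Your proof is correct and complete. The paper does not prove this statement: it quotes Weyl's inequality as a classical result and immediately applies it with $A = H_d$, $B = H_0$. Your Courant--Fischer argument is the standard textbook proof and supplies what the paper takes on citation. You handle the dimension count correctly: since $k + (p-k+1) > p$, any $k$-dimensional subspace meets the trailing eigenspace nontrivially. The monotonicity step is also sound, because a pointwise bound $f \le g + \varepsilon$ on the unit sphere of $S$ passes first to $\min_S$ and then to $\max_S$.

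Two small remarks. First, the bound $|x^\top E x| \le \|E\|_2$ for unit $x$ follows directly from Cauchy--Schwarz, $|x^\top E x| \le \|x\|_2 \|Ex\|_2 \le \|E\|_2$, so you do not need the eigenbasis of $E$ at that point.

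Second, suppose you keep $\lambda_{\min}(E) \le x^\top E x \le \lambda_{\max}(E)$ instead of weakening it to $\pm\varepsilon$. The same argument then gives the one-sided bounds $\lambda_p(\Delta H) \le \lambda_k^d - \lambda_k^0 \le \lambda_1(\Delta H)$, where $\lambda_1(\Delta H) \ge \cdots \ge \lambda_p(\Delta H)$ are the eigenvalues of $\Delta H$. This form is more useful for the paper. It is what links the sign of the disruption index $\Phi$ to the sign structure of $\Delta H$: for instance, $\Delta H \succeq 0$ forces $\Phi \ge 0$. The symmetric bound, by contrast, only controls $|\Phi|$.
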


Applied to $A = H_d$ and $B = H_0$, Weyl's inequality gives
\begin{equation}
|\lambda_k^d - \lambda_k^0| \leq \|\Delta H\|_2
\label{eq:weyl_applied}
\end{equation}
for every mode $k$, where $\lambda_k^d$ and $\lambda_k^0$ are the $k$-th eigenvalues of $H_d$ and $H_0$, respectively. The spectral norm $\|\Delta H\|_2$ equals the magnitude of the largest eigenvalue of $\Delta H$. Thus the maximum possible shift in any single mode energy is bounded by the magnitude of the strongest mode of the disease perturbation itself.

\begin{proposition}[Spectral Disruption Index]
Define the \emph{spectral disruption index}
\begin{equation}
\Phi = \frac{\lambda_1^d - \lambda_1^0}{\lambda_1^0}
\label{eq:disruption}
\end{equation}

where \textbf{$\Phi$} is the fractional shift in the dominant eigenvalue. By \eqref{eq:weyl_applied}, $|\Phi| \leq \|\Delta H\|_2 / \lambda_1^0$. When $\Phi < 0$, disease reduces the dominant mode energy indicating disruption of the primary axis of healthy biomarker covariation. When $\Phi > 0$, disease amplifies it, indicating hypercorrelation along the dominant axis.
\end{proposition}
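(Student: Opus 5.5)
The bound is a direct specialization of Weyl's inequality in the form \eqref{eq:weyl_applied}. First, $\Phi$ must be well defined, so I will assume $\lambda_1^0 > 0$. This is equivalent to $H_0 \neq 0$, because $H_0$ is positive semidefinite and so $\lambda_1^0 = \|H_0\|_2$. Biologically it means the healthy cohort has some nonzero biomarker variance. I will state this as a standing hypothesis. Under it, the denominator of \eqref{eq:disruption} is strictly positive.

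Next, apply Weyl's theorem with $A = H_d$, $B = H_0$ and $k = 1$. Both matrices are real symmetric, since each is a Gram matrix divided by $n$, so the theorem applies. It gives $|\lambda_1^d - \lambda_1^0| \le \|H_d - H_0\|_2 = \|\Delta H\|_2$. Dividing by the positive quantity $\lambda_1^0$ yields $|\Phi| \le \|\Delta H\|_2/\lambda_1^0$ directly.

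For the interpretive sign statements, note that $\operatorname{sgn}\Phi = \operatorname{sgn}(\lambda_1^d - \lambda_1^0)$, again because $\lambda_1^0 > 0$. To make the interpretation precise, I would use the Rayleigh characterization $\lambda_1 = \max_{\|v\|=1} v^\top H v$. If $\Phi < 0$, every unit direction $v$ satisfies $v^\top H_d v \le \lambda_1^d < \lambda_1^0 = q_1^{0\top} H_0 q_1^0$. In particular the healthy dominant mode $q_1^0$ carries strictly less energy under disease, so $q_1^{0\top}\Delta H\, q_1^0 < 0$, which is the claimed suppression of the primary axis. If $\Phi > 0$, then $q_1^{d\top} H_d q_1^d > \lambda_1^0 \ge q_1^{d\top} H_0 q_1^d$. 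Hence $q_1^{d\top}\Delta H\, q_1^d > 0$, meaning covariance is amplified along the disease dominant axis, which is the hypercorrelation reading.

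I do not expect a real technical obstacle. The only delicate point is that the sign claims are interpretive rather than strictly mathematical. For $\Phi > 0$ in particular, the amplified direction is $q_1^d$, which may be rotated away from $q_1^0$. I would state the amplification along $q_1^d$ honestly and defer control of that rotation to the Davis–Kahan analysis later in the section. I would also remark that the bound is tight: taking $\Delta H = \epsilon\, q_1^0 q_1^{0\top}$ with $\epsilon > 0$ gives $\Phi = \epsilon/\lambda_1^0$, so equality is attained.
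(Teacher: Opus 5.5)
Your proposal is correct and follows the paper's own argument: the paper justifies the bound only by citing Weyl's inequality \eqref{eq:weyl_applied} at $k=1$, which bounds $|\lambda_1^d-\lambda_1^0|$ by $\|\Delta H\|_2$, and then dividing by $\lambda_1^0$. Your additions go beyond what the paper states but are all sound: the standing hypothesis $\lambda_1^0>0$ (equivalently $H_0\neq 0$) that the paper leaves implicit, the Rayleigh-quotient reading of the sign statements (with the honest caveat that for $\Phi>0$ the amplification is along $q_1^d$), and the rank-one tightness example.
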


This formalism is key as \textbf{the sign and magnitude of $\Phi$ constitute a scalar characterization of disease severity}. Negative $\Phi$, or disrupted coordination of the comparative healthy mode, has been observed empirically across neurological disease cohorts \citep{Guo2025, Huang2025} which may represent immune system dysregulation or loss of homeostatic mechanisms that may be preventative of disease.

\subsection{Eigenvector Perturbation: The Davis-Kahan Theorem}

Eigenvalue shifts alone do not characterize the full spectral perturbation. Disease also rotates the eigenvectors, realigning the principal axes of biomarker covariation. The Davis-Kahan theorem controls this rotation.

\begin{theorem}[Davis-Kahan, 1970]
Let $A, B \in \mathbb{R}^{p \times p}$ be real symmetric, and let $V_A$, $V_B$ denote the eigenspaces corresponding to eigenvalues in an interval $[\alpha, \beta]$ for $A$ and $B$, respectively. If the eigenvalues of $B$ outside $[\alpha - \delta, \beta + \delta]$ are separated from $[\alpha, \beta]$ by a gap $\delta > 0$, then
\begin{equation}
\sin\Theta(V_A, V_B) \leq \frac{\|A - B\|_2}{\delta},
\label{eq:davis_kahan}
\end{equation}
where $\Theta$ denotes the matrix of principal angles between the two subspaces.
\end{theorem}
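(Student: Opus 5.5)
We prove the $\sin\Theta$ bound through the Sylvester-equation argument of Davis and Kahan. We work in the operator (spectral) norm throughout, so $\|\sin\Theta(V_A,V_B)\|_2$ is the sine of the largest principal angle. The gap hypothesis is read in its standard form: the eigenvalues of $A$ associated with $V_A$ lie in $[\alpha,\beta]$, and every eigenvalue of $B$ whose eigenvector is not in $V_B$ lies outside $(\alpha-\delta,\beta+\delta)$.

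\textbf{Step 1: notation and the target quantity.} Let $U \in \mathbb{R}^{p\times r}$ have orthonormal columns spanning $V_A$, so $AU = U\Lambda_1$ with the spectrum of $\Lambda_1$ contained in $[\alpha,\beta]$. Let $W_\perp \in \mathbb{R}^{p\times(p-r)}$ have orthonormal columns spanning $V_B^\perp$, so $B W_\perp = W_\perp \Lambda_2$ with the spectrum of $\Lambda_2$ outside $(\alpha-\delta,\beta+\delta)$. This decomposition exists because $B$ is symmetric, so its eigenspaces are orthogonal. A standard CS-decomposition fact gives $\|\sin\Theta(V_A,V_B)\|_2 = \|W_\perp^\top U\|_2$. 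The goal is therefore to bound $\|W_\perp^\top U\|_2$.

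\textbf{Step 2: derive a Sylvester equation.} Set $E = A-B$ and $R = BU - U\Lambda_1 = (B-A)U = -EU$, so that $\|R\|_2 \le \|E\|_2$. Multiply on the left by $W_\perp^\top$ and use $W_\perp^\top B = \Lambda_2 W_\perp^\top$, which holds by the symmetry of $B$. This gives
\begin{equation*}
\Lambda_2 X - X\Lambda_1 = W_\perp^\top R, \qquad X := W_\perp^\top U .
\end{equation*}
Since $W_\perp$ has orthonormal columns, $\|W_\perp^\top R\|_2 \le \|E\|_2$.

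\textbf{Step 3: invert the Sylvester operator.} We need the bound $\|\Lambda_2 X - X\Lambda_1\|_2 \ge \delta \|X\|_2$ whenever the spectrum of $\Lambda_1$ lies in $[\alpha,\beta]$ and that of $\Lambda_2$ lies outside $(\alpha-\delta,\beta+\delta)$. This is the main obstacle. In the Frobenius norm it is immediate entrywise, since in eigenbases $(\Lambda_2X-X\Lambda_1)_{ij} = (\mu_i-\lambda_j)X_{ij}$ and $|\mu_i-\lambda_j|\ge\delta$. For the spectral norm we use the interval structure.

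First set $c = (\alpha+\beta)/2$ and $\rho = (\beta-\alpha)/2$, so that $\|\Lambda_1 - cI\|_2 \le \rho$. Next split $\Lambda_2$ into the block with eigenvalues $\ge \beta+\delta$ and the block with eigenvalues $\le \alpha-\delta$, with corresponding row blocks $X_+$ and $X_-$. For the upper block, after shifting by $c$, the block $\Lambda_2^+ - cI$ is invertible with $\|(\Lambda_2^+-cI)^{-1}\|_2 \le 1/(\rho+\delta)$. Then
\begin{equation*}
\|\Lambda_2^+X_+ - X_+\Lambda_1\|_2 \ge \frac{\|X_+\|_2}{\|(\Lambda_2^+-cI)^{-1}\|_2} - \|X_+\|_2\,\rho \ge \delta\|X_+\|_2 .
\end{equation*}
The lower block is handled symmetrically.

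Combining the two blocks in the spectral norm costs at most a constant factor. This is the known distinction between the $\sin\Theta$ theorem for one-sided separation, where the constant is $1$, and the general two-sided case, where Bhatia–Davis–McIntosh give a constant $\pi/2$. If the stated constant $1$ is required in the two-sided case, we will instead report the Frobenius-norm version, which holds with constant exactly $1$ by the entrywise argument. We will note that $\|\sin\Theta\|_2 \le \|\sin\Theta\|_F$, but the bound $\|\sin\Theta\|_F \le \|E\|_F/\delta$ involves $\|E\|_F$ rather than $\|E\|_2$. The cleanest correct statement uses $\|E\|_F$ in the Frobenius version, or $\sqrt{r}\,\|E\|_2$ if a bound in terms of the spectral norm of $E$ is wanted.

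\textbf{Step 4: conclude.} Combining Steps 2 and 3 gives
\begin{equation*}
\delta\|X\|_2 \le \|W_\perp^\top R\|_2 \le \|A-B\|_2 ,
\end{equation*}
and therefore $\|\sin\Theta(V_A,V_B)\|_2 \le \|A-B\|_2/\delta$. This is \eqref{eq:davis_kahan}, with the caveat on the constant discussed in Step 3 applying in the two-sided case. Applied to $A=H_d$ and $B=H_0$, the result bounds the rotation of the disease eigenspaces by $\|\Delta H\|_2$ divided by the healthy spectral gap.
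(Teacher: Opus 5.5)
The paper gives no proof of this theorem; it is cited as the classical Davis--Kahan result. So the comparison here is with the standard argument. Your Steps 1, 2 and 4 are exactly that argument: reduce to $X = W_\perp^\top U$ and derive $\Lambda_2 X - X\Lambda_1 = -W_\perp^\top E U$.

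Step 3, however, has a genuine gap. You never obtain $\|\Lambda_2 X - X\Lambda_1\|_2 \ge \delta\|X\|_2$ for the full $X$. You then conclude the stated bound only ``with a caveat on the constant,'' and offer Frobenius-norm or $\sqrt{r}\,\|E\|_2$ versions instead. Those are weaker statements, not the theorem.

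The missing observation is that you do not need to split $\Lambda_2$ at all. Set $c = (\alpha+\beta)/2$ and $\rho = (\beta-\alpha)/2$. A real $\mu$ lies outside $(\alpha-\delta,\beta+\delta)$ if and only if $|\mu - c| \ge \rho+\delta$, and this covers eigenvalues above $\beta+\delta$ and below $\alpha-\delta$ at once. Hence $\|(\Lambda_2 - cI)^{-1}\|_2 \le 1/(\rho+\delta)$ for the whole block $\Lambda_2$. Your estimate for $X_+$ then applies verbatim to $X$:
\[
\|\Lambda_2X - X\Lambda_1\|_2 \ge \|(\Lambda_2-cI)X\|_2 - \|X(\Lambda_1-cI)\|_2 \ge (\rho+\delta)\|X\|_2 - \rho\|X\|_2 = \delta\|X\|_2 .
\]
With Step 2 this gives $\|\sin\Theta\|_2 \le \|W_\perp^\top E U\|_2/\delta \le \|A-B\|_2/\delta$, with constant exactly $1$. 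The same computation works in every unitarily invariant norm, since it only uses $\|DX\| \ge \|X\|/\|D^{-1}\|_2$ and $\|XG\| \le \|X\|\,\|G\|_2$.

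The $\pi/2$ constant you cite belongs to a different hypothesis. There the two spectra are merely at distance $\delta$ with no interval/exterior structure, for example interlaced. That is not the situation here: one spectrum sits in $[\alpha,\beta]$ and the other avoids the $\delta$-enlarged interval on both sides. Even your splitting route would not give $\pi/2$: from $X^\top X = X_+^\top X_+ + X_-^\top X_-$ the blockwise bounds combine to a factor $\sqrt{2}$.

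One smaller point in Step 1: the identity $\|\sin\Theta(V_A,V_B)\|_2 = \|W_\perp^\top U\|_2$ presumes $\dim V_A = \dim V_B$. Otherwise, read the left side as $\|(I-P_{V_B})P_{V_A}\|_2$, which is what the Sylvester argument actually bounds.
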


Applied to the leading eigenvector $q_1^0$ of $H_0$ and $q_1^d$ of $H_d$, with eigengap $\delta = \lambda_1^0 - \lambda_2^0$:
\begin{equation}
\sin\angle(q_1^0, q_1^d) \leq \frac{\|\Delta H\|_2}{\lambda_1^0 - \lambda_2^0}.
\label{eq:dk_applied}
\end{equation}

\textbf{This bound is of direct clinical significance.} When the healthy eigengap $\lambda_1^0 - \lambda_2^0$ is large, indicating that one axis of biomarker covariation dominates the healthy state, the leading disease eigenvector cannot rotate far from the healthy reference without a commensurately large perturbation. Conversely, when the healthy spectrum is flat (small eigengap), even modest disease perturbations can produce substantial eigenvector rotation, making the spectral basis unstable as a reference frame. This provides a criterion for cohort quality assessment, specifically where the eigengap of the reference Hamiltonian should be evaluated before spectral methods are applied.

\subsection{Spectral Fingerprint of Disease}

The combined information from eigenvalue shifts and eigenvector rotations defines the \emph{spectral fingerprint} of a disease state:
\begin{equation}
\mathcal{F}(H_d, H_0) = \left\{ (\delta\lambda_k,\, \angle(q_k^0, q_k^d)) \right\}_{k=1}^p.
\label{eq:fingerprint}
\end{equation}

This fingerprint is a $p$-dimensional characterization of how disease reorganizes biomarker covariance relative to the healthy state. \textbf{Disease conditions with distinct pathophysiology will produce distinct fingerprints}, even when their mean biomarker profiles overlap. Thus, spectral methods are particularly valuable in the clinical setting where mean-level biomarker differences are insufficient to discriminate disease subtypes.

\section{The Spectral Prognostic Score}
\label{sec:score}

\subsection{Derivation}

Given the spectral decomposition of the disease Hamiltonian $H_d = Q_d \Lambda_d Q_d^\top$ and the healthy isotropic variance $\sigma^2$, define the \emph{spectral prognostic score} for patient $i$ with biomarker vector $x_i \in \mathbb{R}^p$ as
\begin{equation}
\pi_i^{(k)} = x_i^\top q_k^d,
\label{eq:score}
\end{equation}
the projection of the patient's biomarker vector onto the $k$-th eigenmode of the disease Hamiltonian. The composite score across all $p$ modes is
\begin{equation}
\Pi_i = \frac{1}{2}\sum_{k=1}^p \left(\frac{1}{\sigma^2} - \frac{1}{\lambda_k^d}\right)(x_i^\top q_k^d)^2,
\label{eq:composite}
\end{equation}
a signed, eigenvalue-weighted sum of squared projections. Modes for which disease amplifies variance beyond the healthy baseline ($\lambda_k^d > \sigma^2$) receive positive weight, so that large projection onto those modes is evidence for disease. Modes for which disease suppresses variance ($\lambda_k^d < \sigma^2$) receive negative weight, so that large projection onto those modes is evidence against disease. The healthy variance $\sigma^2$ is estimable from the bulk of the reference eigenspectrum via the Marchenko-Pastur law, as described in Section~\ref{sec:formalism}. Figure [\ref{fig:disease_geometry}] shows what this score is doing geometrically. The raw biomarker space is uninformative, the eigenvector frame reorganizes it, and the eigenplane separates disease from stable. 

\begin{figure}[!t]
    \centering
    \includegraphics[width=\textwidth]{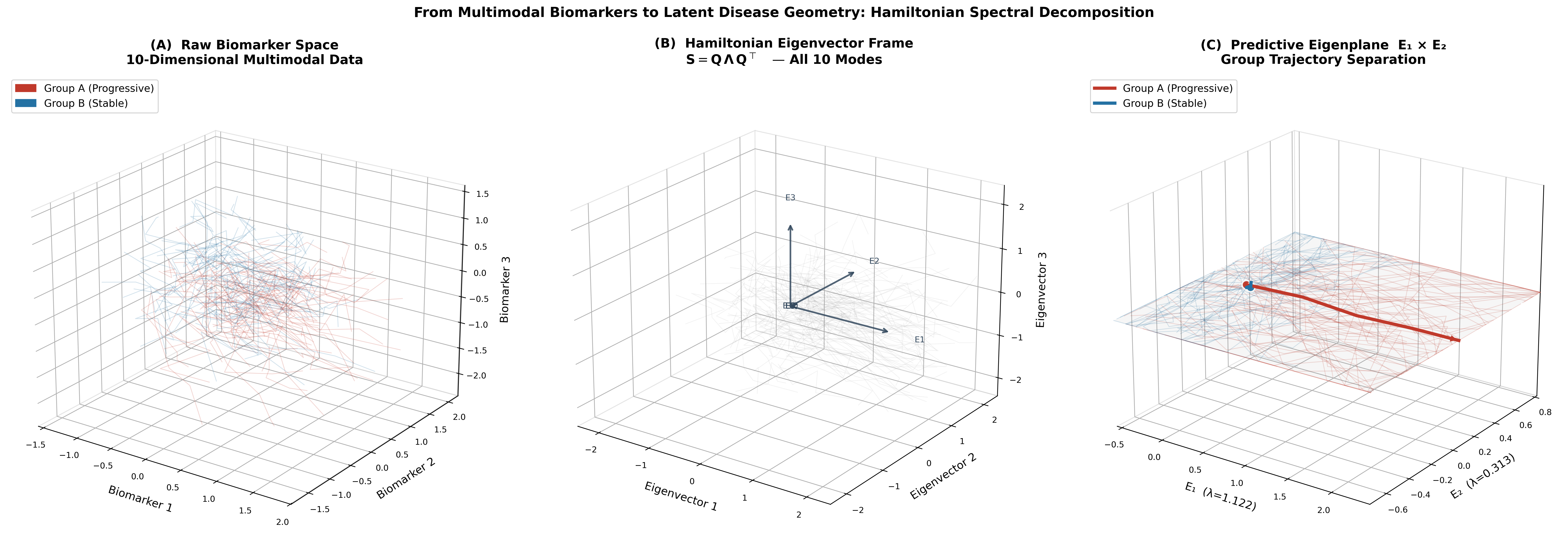}
    \caption{\textbf{From multimodal biomarkers to latent disease geometry via
    Hamiltonian spectral decomposition.}
    \textbf{(A) Raw biomarker space.} Ten-dimensional multimodal biomarker
    trajectories for Group A (progressive disease, red) and Group B (stable,
    blue), shown projected onto three representative biomarker axes. The two
    groups are indistinguishable in the original measurement space.
    \textbf{(B) Hamiltonian eigenvector frame.} The biomarker Hamiltonian
    $S = Q\Lambda Q^\top$ is decomposed into its 10 eigenmodes. The three
    dominant eigenvectors (E1, E2, E3) define a new coordinate frame that
    aligns with the principal axes of biomarker covariation across the cohort
    rather than with individual measurement dimensions.
    \textbf{(C) Predictive eigenplane $E_1 \times E_2$.} Patient trajectories
    projected onto the leading two eigenmodes ($\lambda_1 = 1.122$,
    $\lambda_2 = 0.313$) reveal complete separation of progressive from stable
    disease. Group A trajectories diverge monotonically along E1 while Group B
    remains clustered near the origin, demonstrating that the spectral
    prognostic score $\Pi_i$ recovers disease-discriminant structure that is
    invisible in the original biomarker space.}
    \label{fig:disease_geometry}
\end{figure}

\subsection{Optimality}

\begin{theorem}[Optimality of the Spectral Score]
\label{thm:optimality}
Let healthy patients $x \sim \mathcal{N}(0, H_0)$ and disease patients $x \sim \mathcal{N}(0, H_d)$ with $\mu_d = 0$ (pure covariance perturbation) and $H_0 = \sigma^2 I_p$ (isotropic healthy covariance), where $\sigma^2 > 0$. Then the Neyman-Pearson optimal test statistic for discriminating healthy from disease is, up to an additive constant,
\begin{equation}
\ell(x) = \frac{1}{2}\sum_{k=1}^p \left(\frac{1}{\sigma^2} - \frac{1}{\lambda_k^d}\right)(x^\top q_k^d)^2,
\end{equation}
which is identical to $\Pi_i$ defined in \eqref{eq:composite}. The score $\Pi_i$ is therefore the minimum-error-rate linear-quadratic classifier between the healthy and disease distributions under these assumptions.
\end{theorem}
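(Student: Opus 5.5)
The plan is to compute the log-likelihood ratio of the two Gaussian densities directly and then invoke the Neyman--Pearson lemma. Both hypotheses are simple and fully specified: $x\sim\mathcal{N}(0,\sigma^2 I_p)$ under health and $x\sim\mathcal{N}(0,H_d)$ under disease. So the most powerful test at any level rejects health when $\log f_d(x)-\log f_0(x)$ exceeds a threshold, and any statistic differing from this log-ratio by an additive constant induces the same family of tests. I will assume $H_d$ is positive definite, i.e.\ $\lambda_p^d>0$, so that the disease density exists and $1/\lambda_k^d$ is defined. This is implicit in the statement and should be made explicit. If $H_d$ were singular, the disease law would live on a subspace and the two measures would be mutually singular, which needs separate treatment.

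First I write out the log-ratio:
\begin{equation*}
\log\frac{f_d(x)}{f_0(x)} = -\tfrac12 x^\top H_d^{-1}x + \tfrac{1}{2\sigma^2}x^\top x - \tfrac12\log\det H_d + \tfrac p2\log\sigma^2 .
\end{equation*}
The determinant terms do not depend on $x$ and make up the additive constant $c=\tfrac12\big(p\log\sigma^2-\sum_k\log\lambda_k^d\big)$.

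Next I diagonalize in the disease eigenbasis. Using $H_d^{-1}=Q_d\Lambda_d^{-1}Q_d^\top$ gives $x^\top H_d^{-1}x=\sum_k (x^\top q_k^d)^2/\lambda_k^d$. Because $Q_d$ is orthogonal (the spectral decomposition \eqref{eq:spectral}), Parseval gives $x^\top x=\sum_k (x^\top q_k^d)^2$. Substituting both identities, the quadratic part becomes exactly $\tfrac12\sum_k(\sigma^{-2}-1/\lambda_k^d)(x^\top q_k^d)^2=\ell(x)$. This matches \eqref{eq:composite} with $x=x_i$. The isotropy hypothesis $H_0=\sigma^2I_p$ is used precisely here: it lets $H_0^{-1}$ be diagonal in the eigenbasis of $H_d$. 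Without it, the statistic would involve $H_0^{-1}-H_d^{-1}$, which is generally not diagonal in $Q_d$.

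Finally, I apply the Neyman--Pearson lemma to conclude optimality. Since $\ell$ is continuous and, when $H_d\ne\sigma^2 I$, nonconstant, its level sets have probability zero and no randomization is needed. The likelihood-ratio test $\{\ell(x)>t\}$ is then most powerful at its level. With the threshold $t=-c+\log(\pi_0/\pi_d)$ for prior probabilities $\pi_0,\pi_d$, it is the Bayes rule and hence minimizes the misclassification probability.

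The main obstacle is interpretive rather than technical: reconciling the theorem's final sentence with what the computation actually gives. The score is quadratic in $x$, not linear. The precise, defensible claim is that $\Pi_i$ is the log-likelihood ratio up to the constant $c$, and so it induces the most powerful and Bayes-optimal tests (the quadratic discriminant). I would state the conclusion in that form and note explicitly that the degenerate case $H_d=\sigma^2 I_p$ gives $\ell\equiv 0$ and carries no discriminating information.
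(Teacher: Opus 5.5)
Your proposal is correct and follows the paper's own proof: you write the Gaussian log-likelihood ratio, absorb the determinant terms into an additive constant, diagonalize in $Q_d$ using $H_0^{-1}=\sigma^{-2}I_p$, and invoke Neyman--Pearson. Your extra care is sound: you make the positive-definiteness of $H_d$ explicit, you give the Bayes threshold for minimum error rate, and you read ``linear-quadratic'' as the quadratic discriminant. One small correction: the level sets have measure zero because $\ell-t$ is a nonzero polynomial when $H_d\neq\sigma^2 I_p$, not merely because $\ell$ is continuous and nonconstant.
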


\begin{proof}
The log-likelihood ratio for $x$ under disease versus healthy Gaussian is
\begin{align}
\ell(x) &= \log \frac{p_d(x)}{p_0(x)} \nonumber \\
&= -\frac{1}{2} x^\top H_d^{-1} x + \frac{1}{2} x^\top H_0^{-1} x + \frac{1}{2}\log\frac{\det H_0}{\det H_d} \nonumber \\
&= \frac{1}{2} x^\top (H_0^{-1} - H_d^{-1}) x + C,
\end{align}
where $C = \frac{1}{2}\log(\det H_0 / \det H_d)$ is a constant with respect to $x$. Setting $H_0 = \sigma^2 I_p$ gives $H_0^{-1} = \sigma^{-2} I_p$. Substituting the spectral decomposition $H_d^{-1} = Q_d \Lambda_d^{-1} Q_d^\top$ yields
\begin{equation}
\ell(x) = \frac{1}{2} \sum_{k=1}^p \left(\frac{1}{\sigma^2} - \frac{1}{\lambda_k^d}\right)(x^\top q_k^d)^2 + C,
\end{equation}
which equals $\Pi_i + C$. By the Neyman-Pearson lemma, the likelihood ratio test is the uniformly most powerful test at every significance level, so $\Pi_i$ is optimal. Modes for which $\lambda_k^d > \sigma^2$ contribute positively is evidence for disease. Modes for which $\lambda_k^d < \sigma^2$ contribute negatively is evidence against disease, more consistent with the interpretation that disease suppresses coordinated variation along those axes. This completes the proof.
\end{proof}

\begin{remark}
Theorem~\ref{thm:optimality} establishes the optimality of $\Pi_i$ under isotropic $H_0$. When $H_0$ is non-isotropic, the optimal statistic involves both $H_0^{-1}$ and $H_d^{-1}$, corresponding to the generalized eigenvalue problem $H_d v = \lambda H_0 v$. The solution eigenvectors are the \emph{discriminant spectral modes}, analogous to linear discriminant analysis but operating on full covariance structure rather than class means. The isotropic assumption is testable: the eigengap of the reference Hamiltonian $H_0$ and comparison to the Marchenko-Pastur bulk provide a direct diagnostic.
\end{remark}

\section{Eigenbasis Transfer Across Cohorts}
\label{sec:transfer}

A fundamental requirement for clinical deployment of spectral methods is \emph{eigenbasis transferability}, specifically the ability to apply an eigenbasis $Q_d$ estimated in a source cohort to compute prognostic scores in a target cohort drawn from a related but distinct population. We formalize the conditions under which this transfer is valid and quantify the expected degradation when those conditions are only approximately satisfied.

\subsection{Principal Angles and Subspace Distance}

Let $Q_s \in \mathbb{R}^{p \times r}$ and $Q_t \in \mathbb{R}^{p \times r}$ denote the top-$r$ eigenvectors of the source and target Hamiltonians, $H_s$ and $H_t$, respectively. The \emph{principal angles} $\theta_1 \leq \theta_2 \leq \cdots \leq \theta_r$ between the column spaces of $Q_s$ and $Q_t$ are defined via the singular value decomposition of $Q_s^\top Q_t = U \Sigma V^\top$, with $\cos\theta_k = \sigma_k$, where $\sigma_k$ are the singular values. Let $\Theta = \mathrm{diag}(\theta_1, \ldots, \theta_r)$ denote the diagonal matrix of principal angles. The subspace distance between source and target eigenspaces is
\begin{equation}
d(Q_s, Q_t) = \|\sin\Theta\|_F = \left(\sum_{k=1}^r \sin^2\theta_k\right)^{1/2},
\label{eq:subspace_distance}
\end{equation}
where $\|\cdot\|_F$ denotes the Frobenius norm.

\subsection{Transfer Bound}

\begin{proposition}[Prognostic Score Degradation Under Transfer]
Let $\pi^s_i = x_i^\top q_1^s$ and $\pi^t_i = x_i^\top q_1^t$ denote the leading spectral scores computed using the source and target eigenvectors, respectively. Then for any patient vector $x_i$ with $\|x_i\|_2 \leq M$,
\begin{equation}
|\pi^s_i - \pi^t_i| \leq M \cdot \|q_1^s - q_1^t\|_2 \leq M \cdot 2\sin\frac{\theta_1}{2},
\label{eq:transfer_bound}
\end{equation}
where $\theta_1$ is the leading principal angle between the source and target eigenspaces.
\end{proposition}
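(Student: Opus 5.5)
The first inequality is Cauchy–Schwarz, and the second is the chord-length identity for unit vectors.

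\emph{Step 1.} Write $\pi^s_i - \pi^t_i = x_i^\top(q_1^s - q_1^t)$. Cauchy–Schwarz then gives $|\pi^s_i - \pi^t_i| \le \|x_i\|_2\,\|q_1^s - q_1^t\|_2 \le M\,\|q_1^s - q_1^t\|_2$.

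\emph{Step 2.} Both $q_1^s$ and $q_1^t$ are unit vectors, so
\begin{equation*}
\|q_1^s - q_1^t\|_2^2 = 2 - 2\langle q_1^s, q_1^t\rangle = 2(1-\cos\phi) = 4\sin^2(\phi/2),
\end{equation*}
where $\phi\in[0,\pi]$ is the angle between them. Hence $\|q_1^s - q_1^t\|_2 = 2\sin(\phi/2)$.

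\emph{Step 3.} This step relates $\phi$ to $\theta_1$ and is the main obstacle, for two reasons.

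The first is a sign ambiguity: eigenvectors are defined only up to sign. I will fix $q_1^t$ so that $\langle q_1^s,q_1^t\rangle\ge 0$, giving $\phi\in[0,\pi/2]$. Without this choice the bound fails outright, since for $q_1^t=-q_1^s$ we would have $\theta_1=0$ but $\|q_1^s-q_1^t\|_2=2$.

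The second is that $\theta_1$ is the \emph{smallest} principal angle between the $r$-dimensional column spaces of $Q_s$ and $Q_t$. It is attained by the top singular pair of $Q_s^\top Q_t$, which need not be the pair $(q_1^s,q_1^t)$. In general only $\theta_1 \le \phi$ holds, because $\cos\theta_1 = \sigma_{\max}(Q_s^\top Q_t) \ge |\langle q_1^s, q_1^t\rangle| = \cos\phi$. That inequality points the wrong way for the stated bound.

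I would therefore read $\theta_1$ in the statement as the principal angle between the one-dimensional leading eigenspaces $\mathrm{span}(q_1^s)$ and $\mathrm{span}(q_1^t)$, i.e.\ the case $r=1$. There $\cos\theta_1 = |\langle q_1^s,q_1^t\rangle| = \cos\phi$, so $\phi=\theta_1$ and the bound holds with equality in the second step. I would state this interpretation and the sign convention explicitly.

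For $r>1$ the correct general statement replaces $\theta_1$ by the angle between the leading vectors, or alternatively uses the bound $\sin\theta_r$ for the largest principal angle together with Davis–Kahan to control it. I would note this rather than claim the stated inequality for arbitrary $r$.

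As a closing remark, $2\sin(\theta_1/2)\le \sqrt{2}\sin\theta_1$ on $[0,\pi/2]$. Combined with \eqref{eq:dk_applied}, this gives a bound in terms of $\|H_t-H_s\|_2/(\lambda_1^s-\lambda_2^s)$, which connects the result back to the Davis–Kahan section.
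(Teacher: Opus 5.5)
The paper states this proposition without proof, so there is no competing route. Your Steps 1 and 2 (Cauchy--Schwarz, then $\|u-v\|_2=2\sin(\phi/2)$ for unit vectors) are the evident intended argument, and they are correct.

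Your Step 3 is the substantive part, and it is right. With the paper's definitions ($\theta_1$ the \emph{smallest} principal angle between top-$r$ subspaces, no sign convention), the proposition is false, not merely unproved by this route. A concrete witness settles it. Take $r=2$ with $q_1^s=e_1$, $q_2^s=e_2$ and $q_1^t=e_2$, $q_2^t=e_1$, for example $H_s=\mathrm{diag}(2,1,0,\ldots)$ and $H_t=\mathrm{diag}(1,2,0,\ldots)$. The column spaces coincide, so $\theta_1=\theta_2=0$. Yet $x_i=Me_1$ gives $|\pi_i^s-\pi_i^t|=M>0$, and no choice of signs helps, since $\|e_1\pm e_2\|_2=\sqrt2$. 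So your reading is the version that is actually true: $r=1$, signs aligned so that $\langle q_1^s,q_1^t\rangle\ge 0$, hence $\phi=\theta_1$ and the second inequality is an equality. Your proof of that version is complete.

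Two of your side remarks need correcting.

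First, the suggested alternative for $r>1$ via the largest angle $\theta_r$ does not work, because in the swap example every principal angle is zero. What $\theta_r$ controls is $\|Q_sQ_s^\top-Q_tQ_t^\top\|_2=\sin\theta_r$. That bounds subspace-level quantities such as $\|(Q_sQ_s^\top-Q_tQ_t^\top)x_i\|_2\le M\sin\theta_r$, not the individual leading score. The leading score is controlled only through a gap at $\lambda_1$, i.e.\ Davis--Kahan with $r=1$.

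Second, your inequality $2\sin(\theta_1/2)\le\sqrt2\sin\theta_1$ on $[0,\pi/2]$ is fine. However, \eqref{eq:dk_applied} is not valid as written, since it has only one matrix's gap $\lambda_1-\lambda_2$ in the denominator. Take $H_0=\mathrm{diag}(1,0)$ and $H_d=\begin{pmatrix}0.51&0.1\\0.1&0.49\end{pmatrix}$. Then $\|\Delta H\|_2\approx 0.50$ with gap $1$, but $\sin\angle(q_1^0,q_1^d)\approx 0.67$. Before chaining it with this proposition, you need one of two fixes: a factor of $2$ in the numerator (the Yu--Wang--Samworth variant), or the mixed gap $\lambda_1^s-\lambda_2^t$ from the classical $\sin\Theta$ theorem.
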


This bound has a direct practical interpretation. Before applying a source eigenbasis to a target cohort, one should compute the principal angle $\theta_1$ between the two leading eigenspaces. When $\theta_1$ is small, transfer is approximately lossless. When $\theta_1$ is large, the source basis does not represent the dominant axis of target variation, and transfer will produce systematically biased scores.

\subsection{Sufficient Conditions for Valid Transfer}

By the Davis-Kahan theorem \eqref{eq:dk_applied}, the principal angle between source, \textbf{$H_s$}, and target, \textbf{$H_t$}, leading eigenvectors satisfies
\begin{equation}
\sin\theta_1 \leq \frac{\|H_s - H_t\|_2}{\lambda_1^s - \lambda_2^s}.
\end{equation}

Transfer is therefore well-conditioned when (i) the Hamiltonians of the source and target populations are close in spectral norm, indicating shared underlying covariance structure, and (ii) the source eigengap $\lambda_1^s - \lambda_2^s$ is large, ensuring the leading eigenspace is well-separated and therefore stable under cohort-level variation.

Condition (i) is a statement about population homogeneity where the source and target cohorts should be drawn from populations with similar biomarker covariance structure which in practice requires shared procurement procedures, measurement platforms, and preprocessing pipelines. Condition (ii) is a statement about signal strength where the disease must produce a dominant and well-isolated spectral signature in the source cohort. Both conditions are discernible from the data prior to transfer.

\section{Unification of Existing Methods}
\label{sec:unification}

We now show that the principal methods in multivariate biomarker analysis and multiomics disease modeling are special cases of the spectral biomarker framework, distinguished by their choice of Hamiltonian construction, decomposition, and signal extraction.

\subsection{Principal Component Analysis}

PCA computes the spectral decomposition of the sample covariance matrix $H = X^\top X / n$ and represents patients in the coordinate system of the top-$k$ eigenvectors. This is identical to computing the spectral prognostic score \eqref{eq:score} for $k = 1, \ldots, r$ without a disease reference frame, $H_0$. PCA is therefore the special case of the spectral biomarker framework in which no healthy reference is available and signal extraction is performed by variance ordering alone without perturbative structure.

\subsection{Linear Discriminant Analysis}

LDA seeks the projection direction $w$ that maximizes the ratio of between-class variance to within-class variance:
\begin{equation}
w^* = \arg\max_w \frac{w^\top S_B w}{w^\top S_W w},
\end{equation}
where $S_B$ and $S_W$ are the between-class and within-class scatter matrices. The solution is the leading eigenvector of $S_W^{-1} S_B$, which is the leading solution to the generalized eigenvalue problem $S_B w = \lambda S_W w$. In the two-class setting with equal priors, letting $n_0$ and $n_d$ denote the number of healthy and disease patients, $\mu_0$ and $\mu_d$ their respective class mean vectors, and $n = n_0 + n_d$ the total cohort size, the scatter matrices are $S_B = \frac{n_0 n_d}{n}(\mu_d - \mu_0)(\mu_d - \mu_0)^\top$ and $S_W = \frac{n_0}{n} H_0 + \frac{n_d}{n} H_d$. LDA is therefore the special case of the spectral biomarker framework in which signal is extracted from the mean-shift component of the perturbation while the covariance perturbation $\Delta H$ is pooled into the within-class matrix and treated as noise.

\subsection{Canonical Correlation Analysis}

CCA seeks directions $u \in \mathbb{R}^{p_1}$ and $v \in \mathbb{R}^{p_2}$ that maximize the correlation between projections $Xu$ and $Yv$ of two data matrices $X \in \mathbb{R}^{n \times p_1}$ and $Y \in \mathbb{R}^{n \times p_2}$. The solution is the leading eigenvector of the cross-covariance operator $H_{XX}^{-1/2} H_{XY} H_{YY}^{-1/2}$, where $H_{XY} = X^\top Y / n$. CCA is the special case of the spectral biomarker framework in which the Hamiltonian is constructed from a cross-modal rather than within-modal covariance, and signal extraction targets maximally correlated modes across modalities rather than maximally variable modes within a single modality.

\subsection{Multiomic Factor Methods}

Methods such as MOFA \citep{Argelaguet2018} and JIVE \citep{Lock2013} decompose multi-block data matrices into shared and modality-specific factors. These factors are solutions to structured low-rank approximation problems that can be written as generalized eigenvalue problems on block-structured covariance matrices. They are special cases of the cross-modal Hamiltonian construction in which the block structure of $H$ is exploited to separate shared from modality-specific spectral modes. The spectral biomarker framework provides the parent object from which these block decompositions inherit their properties.

\subsection{Summary}

Table~\ref{tab:unification} summarizes the relationship between existing methods and the spectral biomarker framework.

\begin{table}[h]
\centering
\caption{Existing multivariate biomarker methods as special cases of the spectral framework.}
\label{tab:unification}
\small
\begin{tabular}{llll}
\toprule
\textbf{Method} & \textbf{Hamiltonian Construction} & \textbf{Decomposition} & \textbf{Signal Extraction} \\
\midrule
PCA & $H = X^\top X / n$ & Full eigendecomposition & Variance-ordered projection \\
LDA & $S_W^{-1} S_B$ & Generalized eigenproblem & Mean-shift discriminant \\
CCA & $H_{XX}^{-1/2} H_{XY} H_{YY}^{-1/2}$ & SVD of cross-covariance & Maximal cross-modal correlation \\
MOFA / JIVE & Block $H$ & Low-rank generalized & Shared vs.\ modality-specific modes \\
\textbf{This work} & $\Delta H = H_d - H_0$ & Perturbative eigenanalysis & Disease-discriminant spectral score \\
\bottomrule
\end{tabular}
\end{table}

\section{Discussion}
\label{sec:discussion}

\subsection{Relationship to Current Multiomics Trajectory Methods}

The methods reviewed in the current literature on multiomics disease trajectory analysis \citep{Nagin2018, NaginJones2024, ProustLima2022, Li2021, Maghsoudi2022, Ouwerkerk2023, Carrasco2024, Watanabe2023} are united by a focus on mean biomarker trajectories over time, while group-based trajectory models cluster patients by their mean biomarker evolution, and joint longitudinal models estimate individual mean trajectories with correlated random effects. Machine learning prediction models build scores from feature vectors of mean biomarker measurements at fixed time-points. What none of these methods characterizes is the evolving covariance structure of biomarker measurements as disease progresses, specifically the way in which the coordinated variation among biomarkers is reorganized by pathology.

The spectral biomarker framework addresses this gap directly. By operating on the covariance matrix rather than the mean vector, it captures exactly the information that mean-trajectory methods discard. Disease perturbation of $H_0$ reorganizes the eigenstructure of biomarker covariance. The spectral fingerprint \eqref{eq:fingerprint} quantifies this reorganization precisely. For diseases in which the primary pathological event is a disruption of regulatory coordination, as in neurodegeneration where loss of network synchrony precedes detectable mean-level biomarker changes, the spectral framework may detect disease signal earlier and more specifically than mean-trajectory methods.

\subsection{Thermodynamic Extension}

The analogy between the biomarker Hamiltonian and the quantum mechanical energy operator admits a natural thermodynamic extension. Let $\beta > 0$ denote an inverse temperature parameter. Define the partition function
\begin{equation}
Z(\beta) = \mathrm{Tr}[\exp(-\beta H)] = \sum_{k=1}^p e^{-\beta \lambda_k},
\label{eq:partition}
\end{equation}
where $\mathrm{Tr}[\cdot]$ denotes the matrix trace and the equality follows from the spectral decomposition of $H$. We define the free energy as 
\begin{equation}
F(\beta) = -\frac{1}{\beta} \log Z(\beta),
\label{eq:free_energy}
\end{equation} where the behavior of $F(\beta)$ encodes the full eigenvalue distribution of $H$. A phase transition in $F$ as a function of $\beta$, a non-analytic change in its derivatives, corresponds to a qualitative reorganization of the eigenspectrum, analogous to the phase transitions of statistical mechanics. In the finite-sample setting, this manifests as a gap in the sample eigenspectrum that separates disease signal from the Marchenko-Pastur background distribution. The critical inverse temperature $\beta^*$ at which this transition occurs is a scalar summary of the disease perturbation magnitude and will be formalized in subsequent work.

\subsection{Open Problems}

The framework presented here identifies several open problems for future development.

\textbf{Temporal eigenbasis dynamics.} The present framework treats $H$ as a static operator computed from a cross-sectional cohort. A natural extension defines a time-varying Hamiltonian $H(t)$ estimated from longitudinal data, with the disease trajectory characterized by the path $\{H(t) : t \geq 0\}$ in the space of positive semidefinite matrices. The rate of spectral change $dH/dt$ may itself constitute a clinically informative biomarker of disease acceleration.

\textbf{High-dimensional regime.} When $p \gg n$, the sample Hamiltonian $H = X^\top X / n$ is rank-deficient, with $p - n$ zero eigenvalues. Regularization via ridge ($H + \alpha I$), graphical lasso, or nuclear norm penalization is required. The choice of regularizer affects the spectral structure and the transfer bounds derived in Section~\ref{sec:transfer}. A systematic analysis of regularization effects on spectral disease characterization is an open problem.

\textbf{Missing and censored data.} Clinical multiomics data are frequently incomplete. The effect of missing biomarker measurements on the sample Hamiltonian and its eigenbasis is not characterized by the bounds derived here, which assume complete data. Extensions to incomplete observations, drawing on the literature of matrix completion and covariance estimation under missingness, are required before the framework can be applied to typical clinical datasets.

\textbf{Non-Gaussian biomarkers.} The optimality result in Theorem~\ref{thm:optimality} assumes Gaussian biomarker distributions. Many clinical biomarkers are skewed, zero-inflated, or bounded. The spectral framework is defined without reference to distributional assumptions on $X$, but the optimality of the spectral prognostic score in non-Gaussian settings requires separate analysis.

\subsection{Applications as Instantiations}

The formal results derived here provide the theoretical basis for a class of empirical spectral biomarker studies across disease domains. In neurological disease, Hamiltonian spectral decomposition of cerebrospinal fluid and clinical biomarker panels has been applied to Parkinson's disease staging and Alzheimer's disease conversion prediction, with spectral disruption indices consistent with the perturbation-theoretic predictions of Section~\ref{sec:perturbation}. In oncology, eigendecomposition of proteomic covariance matrices across cancer types reveals a universal malignancy coordinate that admits thermodynamic interpretation in the framework of Section~\ref{sec:discussion}. Critically, in each of these settings the corrected score $\Pi_i$, which assigns negative weight to modes suppressed by disease, outperforms naive variance-weighted projections when disease disrupts rather than amplifies the dominant covariance structure, providing empirical confirmation of Theorem~\ref{thm:optimality} across independent cohorts spanning neurodegeneration, genomic instability, and multimodal proteogenomics. These applications are reported separately; the present paper establishes the mathematical foundation they share.

\section{Conclusion}

We have proposed and formalized the spectral biomarker framework, centered on the biomarker Hamiltonian $H = X^\top X / n$ as the fundamental object of multivariate disease representation. The framework asserts that disease is a perturbation of the spectral structure of healthy biomarker covariance: eigenvalues shift and eigenvectors rotate in proportion to the severity and character of pathological disruption. We derived sharp bounds on these shifts using Weyl's inequality and the Davis-Kahan theorem, proved the optimality of the spectral prognostic score under Gaussian assumptions, established conditions for eigenbasis transfer across cohorts, and unified the principal methods of multivariate biomarker analysis as special cases of the framework. The resulting structure provides a principled mathematical foundation for spectral biomarker analysis and a vocabulary for relating existing and future methods within a common theoretical language.



\end{document}